
\documentclass{article}

\usepackage{microtype}
\usepackage{graphicx}
\usepackage{subfigure}
\usepackage{booktabs} 

\usepackage[acronym,nomain]{glossaries}
\usepackage{xcolor}
\usepackage{ifthen}

\usepackage{enumitem}
\usepackage{svg}

\usepackage{hyperref}


\usepackage[accepted]{icml2025}

\usepackage{amsmath,amssymb,amsthm}
\usepackage{mathtools}
\usepackage{todonotes}

\usepackage[capitalize,noabbrev]{cleveref}

\makeglossaries
\newacronym{ai}{AI}{Artificial Intelligence}
\newacronym{erm}{ERM}{Empirical Risk Minimization}
\newacronym{nn}{NN}{Neural Network}
\newacronym{cnn}{CNN}{Convolutional Neural Network}
\newacronym{dag}{DAG}{Directed Acyclic Graph}
\newacronym{dnn}{DNN}{Deep Neural Network}
\newacronym{cot}{CoT}{Chain-of-Thought}
\newacronym{ntk}{NTK}{Neural Tangent Kernel}
\newacronym{dl}{DL}{Deep Learning}
\newacronym{dmft}{DMFT}{Dynamic Mean Field Theory}
\newacronym{dtm}{DTM}{Deterministic Turing Machine}
\newacronym{llm}{LLM}{Large Language Model}
\newacronym{sgd}{SGD}{Stochastic Gradient Descent}
\newbool{todos}
\newbool{highlights}
\newbool{topics}

\setbool{todos}{false}
\setbool{highlights}{true}
\setbool{topics}{false}

\DeclareMathOperator{\poly}{poly}

\theoremstyle{plain}
\newtheorem{theorem}{Theorem}[section]
\newtheorem*{theorem*}{Theorem}

\newtheorem{corollary}[theorem]{Corollary}
\newtheorem{conjecture}{Conjecture}

\theoremstyle{definition}
\newtheorem{definition}[theorem]{Definition}

\theoremstyle{remark}


\icmltitlerunning{Position: A Theory of Deep Learning must include Compositional Sparsity}

\begin{document}

\twocolumn[
\icmltitle{Position: A Theory of Deep Learning Must Include Compositional Sparsity}

\icmlsetsymbol{equal}{*}

\begin{icmlauthorlist}
\icmlauthor{David A.\ Danhofer}{equal,mit,eth}
\icmlauthor{Davide D'Ascenzo}{equal,mit,torino,unimi}
\icmlauthor{Rafael Dubach}{equal,mit,uzh}
\icmlauthor{Tomaso Poggio}{equal,mit}
\end{icmlauthorlist}

\icmlaffiliation{mit}{Center for Brains, Minds and Machines (CBMM), MIT, Cambridge, MA, USA}
\icmlaffiliation{eth}{ETH Zurich, Zurich, Switzerland}
\icmlaffiliation{uzh}{University of Zurich, Zurich, Switzerland}
\icmlaffiliation{torino}{Politecnico di Torino, Torino, Italy}
\icmlaffiliation{unimi}{University of Milan, Milan, Italy}

\icmlcorrespondingauthor{Davide D'Ascenzo}{davide.dascenzo@unimi.it}

\vskip 0.3in
]

\printAffiliationsAndNotice{\icmlEqualContribution}

    
\begin{abstract}
    Overparametrized \glspl*{dnn} have demonstrated remarkable success in a wide variety of domains too high-dimensional for classical shallow networks subject to the \textit{curse of dimensionality}. However, open questions about fundamental principles, that govern the learning dynamics of \glspl*{dnn}, remain. In this position paper we argue that it is the ability of \glspl*{dnn} to exploit the compositionally sparse structure of the target function driving their success. As such, \glspl*{dnn} can leverage the property that most practically relevant functions can be composed from a small set of constituent functions, each of which relies only on a low-dimensional subset of all inputs. We show that this property is shared by all efficiently Turing-computable functions and is therefore highly likely present in all current learning problems. While some promising theoretical insights on questions concerned with approximation and generalization exist in the setting of compositionally sparse functions, several important questions on the learnability and optimization of \glspl*{dnn} remain. Completing the picture of the role of compositional sparsity in deep learning is essential to a comprehensive theory of artificial---and even general---intelligence.
\end{abstract}
\glsresetall

\section{Introduction}

\glspl*{dnn} have achieved remarkable breakthroughs across numerous domains, including computer vision, playing games~\cite{silver2016mastering}, protein structure prediction~\cite{alphafold_1, alphafold_2}, natural language usage, and complex reasoning~\cite{RomeraParedes2023MathematicalDF,alphageometry,deepseekai}. Despite their rapidly growing set of achievements, our understanding of \glspl*{dnn} still lags behind their empirical success. Without deeper theoretical insights, it remains unclear why certain architectures scale so well to high-dimensional tasks, or how to pinpoint the limits of \gls*{dl} paradigms. 

Historically, many attempts to build intelligent systems relied on logical or rule-based approaches~\cite{Newell1989ComputerSA, hayesroth_building_1983}. In contrast, the rise of \glspl*{dnn} from around 2012 onward ushered in learning-based architectures that surpass traditional algorithms in numerous domains. For several years, the focus has been on pushing performance boundaries: from high-dimensional image recognition (AlexNet~\cite{Krizhevsky2012ImageNetCW}, ResNet~\cite{resnet_2015}) to playing strategic games (AlphaGo~\cite{alphago}, AlphaZero~\cite{alphazero}). More recently, \glspl*{llm} such as the GPT models~\cite{gpt3, openai_gpt4_2023} have advanced natural language understanding and generation. Although these models continue to break benchmark after benchmark, their development often outpaces the foundational theory needed to explain or predict their performance. 

Despite these empirical advances, research in \gls*{dl} theory is striving to keep pace, aiming to address three foundational questions:

\begin{enumerate}
    \item \textit{Approximation}: How well can \glspl*{nn} approximate functions of interest?
    \item \textit{Optimization}: How can we optimize \gls*{nn} parameters on finite training data?
    \item \textit{Generalization}: Why are \glspl*{nn} able to avoid overfitting the training data despite their large capacity?
\end{enumerate}

Building on \citet{poggiofraser2024} and related work~\cite{MhaskarPoggio2016b, poggio_deep_shallow_2017, bauer_deep_2019, schmidt_nonparametric_2020}, \textbf{we argue that the property of compositional sparsity -- that all efficiently Turing-computable functions have -- explains how \glspl*{dnn} can represent, learn and generalize without suffering from the curse of dimensionality}.
Importantly, the curse of dimensionality manifests itself in two distinct ways: it encompasses both the exponential growth in the number of parameters required to approximate high-dimensional functions (the approximation aspect), and the exponential increase in sample complexity and poor convergence rates encountered during training (the optimization aspect). While the theory of compositional sparsity provides a compelling explanation for how deep networks can overcome the curse of dimensionality in terms of approximation, our understanding of how these networks efficiently optimize or learn such representations remains comparatively underdeveloped.
In practice, providing partial structural hints often makes the learning problem much more manageable. For example, exposing intermediate steps (as in chain-of-thought prompting) or restricting layers to operate on only a small subset of inputs (as in convolutional architectures) can effectively guide the model toward the correct hierarchical decomposition. This in turn can reduce optimization complexity and improve the interpretability and generalization of trained networks.

We begin by introducing the classical learning framework and the curse of dimensionality, which bars shallow learners from approximating complex, high-dimensional functions with polynomially many parameters. Next, we contrast this view with deep learners, who may leverage the compositional sparsity of functions to avoid the curse of dimensionality in approximation. Subsequently, we shift our view to optimization and generalization contrasting existing findings and open questions. We conclude our work by presenting alternative views to our proposed angle and summarize our claims.

\section{Classical Learning and the Curse of Dimensionality}
\label{sec:preliminaries}
In the following, we briefly introduce statistical learning viewed through the lens of \gls*{erm} following standard literature~\cite{hastie2009elements,vapnik2013nature,james2023introduction}. We also introduce the notion of the \emph{curse of dimensionality}, to which classical learners such as shallow networks are subject.


In a supervised learning problem, an unknown probability measure $\mu$ in the space of input-output pairs $\mathcal{X} \times \mathcal{Y}$ gives rise to the target function $f_\mu \colon \mathcal{X} \to \mathcal{Y}$ assumed to underlie an observable i.i.d. finite training set $S = \{(x_i, y_i)\}_{i=1}^m$ with $ y_i = f_\mu(x_i)$.
It is the goal to approximate \(f_\mu\) using some \(f\) best, as measured by an expected risk over the probability measure \(\mu\)
\begin{equation}
\mathcal{R}(f) \;=\; \int \ell(f(x), y) \, d\mu(x,y)
\end{equation}
constructed from some per-instance loss \( \ell\).
Since $\mu$ is commonly unknown, the expected risk is approximated empirically via the observable training set, 
\begin{equation}
\mathcal{R}_\text{emp}(f) \;=\; \frac{1}{m}\sum_{i=1}^{m} \ell(f(x_i), y
_i).
\end{equation}
A low empirical risk does not necessarily imply that the approximation predicts well on unseen data. However, the generalization performance of this \gls*{erm} procedure can be bounded by taking the complexity of the hypothesis class $\mathcal H$, over which \(f\) is optimized, into account. The Rademacher complexity $\mathcal{R}_m$ of \(\mathcal H\), defined on the independent Rademacher random variables $\sigma_i$, 
\begin{equation}
\mathcal{R}_m(\mathcal{H}) = \mathbb{E}_{\sigma}\left[\sup_{f \in \mathcal{H}} \frac{1}{m} \sum_{i=1}^m \sigma_i f(x_i)\right]
\end{equation}
probabilistically bounds the gap between the expected risk \(\mathcal R(f)\) and the empirical risk \(\mathcal R_\text{emp}\)
\begin{equation}
    \mathcal R(f) \leq \mathcal R_\text{emp}(f) + \mathcal{R}_m(\mathcal H_f) + o(\delta)
\end{equation}
where the low-order error terms depend on the certainty of the guarantee \(1 - \delta\) and the training set size \(m\).

From an approximation-theoretic perspective, the key challenge is that classical learners such as kernel machines or shallow networks are affected by the curse of dimensionality. As the dimension \( d \) of a function \(f\) grows, these methods may require an exponentially large number of parameters to approximate \(f\) arbitrarily well~\cite{poggio_deep_shallow_2017}.


Consequently, a central question arises: \textit{Which property of real-world target functions ensures that a suitable \gls*{dnn} can approximate them without requiring a number of parameters that grows exponentially with \( d \)?} The crucial insight is that deep networks can exploit suitable structural assumptions about the target function---most notably, that it admits a decomposition which avoids the need for an exponential number of parameters. As we will demonstrate in the following section, this requirement turns out to be surprisingly mild encompassing a broad class of functions. Thus, while traditional approaches suffer in high dimensions, \glspl*{dnn} can  circumvent an exponential blow-up in the number of parameters by leveraging such structure~\cite{beneventano2021deepneuralnetworkapproximation}.
\section{Compositional Sparsity and Deep Learning}
\label{sec:comp-sp}

A central concept in our argument is that of \emph{efficient Turing-computability}. Throughout this work, we use this term to refer to functions in the complexity class $\mathbf{FP}$, that is, function problems solvable by a deterministic Turing machine in polynomial time. While the decision problem class $\mathbf{P}$ involves single-bit yes/no answers, $\mathbf{FP}$ encompasses functions whose outputs can be computed in polynomial time. For real-valued functions, computability can be formalized in multiple ways (see \citealt{poggiofraser2024} for a detailed discussion), but for the purposes of this work, we equate “efficient Turing-computable functions” with those in $\mathbf{FP}$. 

Subsequently, we introduce the notion of compositionally sparse functions and their relevance. We then state how this property helps \glspl*{dnn} to break the curse of dimensionality.  For an alternative set of proofs see \cite{Poggio2025}.

\subsection{Compositionally Sparse Functions}

\begin{definition}[Compositionally Sparse Function]
\label{def:sparse_comp}
A function \(f: \mathcal X^d \to \mathcal X\) is \textit{compositionally sparse} if it can be represented as the composition of at most \(\mathcal O (\poly d)\) constituent functions each of which is sparse, i.e., depends on at most a (small) constant number of variables \(c\).
\end{definition}

A compositionally sparse function can be visually depicted as a \gls*{dag}, in which the leaves represent inputs, the root denotes the output function, and the internal nodes represent the constituent functions. The maximum in-degree of the \gls*{dag} is equivalent to $c$.
Figure \ref{fig:dag} illustrates this definition for a small toy example with total input dimension \(d = 5\), and input dimension of at most \(c = 3\) for all constituent functions. 
Functions with constituents of bounded dimensionality were earlier referred to as ``hierarchically local compositional functions''~\cite{poggio_deep_shallow_2017}. Interestingly, this property is not very restrictive and is satisfied by a large class of functions, as the following theorem states.

\begin{figure}[t]
    \centering
    \includesvg[width=0.34\textwidth]{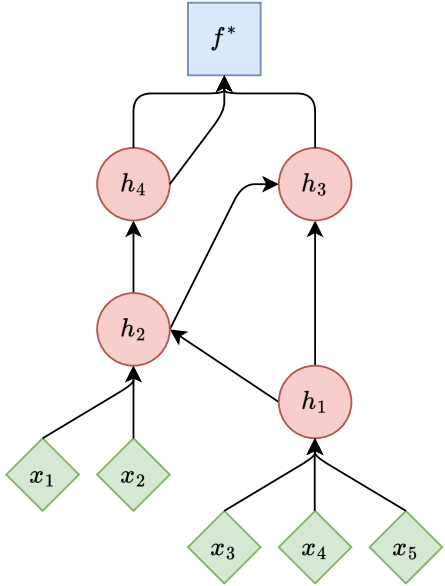}
    \caption{A \gls*{dag} representing a compositionally sparse function. The green diamonds denote the \(d = 5\) input variables, the red dots constituent functions and the blue square the final output. Each function depends on at most \(3 = c \ll d\) variables.}
    \label{fig:dag}
\end{figure}


\begin{theorem}[Efficient Computability implies Compositional Sparsity, restated from \citealt{poggiofraser2024}]
\label{th:mainTuringSparse}
Any function that is efficiently Turing-computable is compositionally sparse.
\end{theorem}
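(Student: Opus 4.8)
The plan is to reduce from the well-known fact that any function in $\mathbf{FP}$ can be computed by a uniform family of polynomial-size Boolean circuits, and then to observe that such a circuit is literally a \gls*{dag} of constituent functions of bounded fan-in. Concretely, let $f$ be efficiently Turing-computable, so there is a \gls*{dtm} $M$ running in time $p(n)$ on inputs of length $n$. First I would invoke the classical simulation of time-bounded Turing machines by circuits (the Cook--Levin--style construction, cf. Pippenger--Fischer and Ladner): the computation of $M$ on an input of length $n$ can be laid out as a tableau of size $O(p(n)^2)$, and each cell of the tableau at time $t+1$ depends only on a constant number of cells at time $t$ (the cell itself and its two neighbours, together with the machine's finite state). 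This yields a Boolean circuit $C_n$ of size $\poly(n)$ in which every gate has fan-in at most some absolute constant, and—crucially—the family $\{C_n\}$ is logspace-uniform, which is what lets us speak of a single compositional scheme rather than an arbitrary advice sequence.

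The second step is to match this construction to Definition~\ref{def:sparse_comp}. The circuit $C_n$ is a \gls*{dag} whose leaves are the $d = n$ input bits, whose internal nodes are Boolean gates (AND, OR, NOT, or NAND), and whose root is the output bit; more generally, if $f$ has multi-bit output one takes a constant number of such roots, or bundles them, without affecting the asymptotics. Each internal node is a function of at most $c$ variables with $c$ a small constant (e.g.\ $c=2$ for a NAND basis, or $c = 3$ if one wants the raw tableau-cell update), and the total number of nodes is $\poly(n) = \poly(d)$. That is exactly the statement that $f$ is compositionally sparse. For real-valued (rather than Boolean) targets one appeals to the chosen formalization of real computability discussed in the cited \citet{poggiofraser2024}: a polynomial-time computation over dyadic rationals to precision $\varepsilon$ again unfolds into a $\poly(d, \log(1/\varepsilon))$-size bounded-fan-in \gls*{dag}, and the approximation error is controlled by the output precision.

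The main obstacle—really the only subtle point—is \emph{uniformity and the treatment of the input length}. Definition~\ref{def:sparse_comp} is stated for a fixed arity $d$, whereas a Turing machine computes on inputs of every length; the honest reading is that for each $d$ the restriction $f|_{\mathcal{X}^d}$ admits a $\poly(d)$-size bounded-fan-in compositional representation, and that these representations are generated uniformly (so the ``theory'' is not smuggling in exponential advice). One should be careful that the circuit-size bound is genuinely polynomial in $d$ and not, say, polynomial in the value of the input—this is automatic for the tableau construction since its dimensions depend only on the running time $p(n)$ and $n = d$. A secondary bookkeeping point is that the standard tableau gives fan-in bounded by a constant determined by the tape alphabet and state set of $M$; if one insists on a universal constant such as $c = 2$, one inserts a constant-depth, constant-size gadget at each node to decompose each bounded-arity local update into NANDs, which changes the node count only by a constant factor and hence preserves the $\mathcal{O}(\poly d)$ bound. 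With these caveats handled, the theorem follows directly from the circuit simulation together with the identification of a bounded-fan-in \gls*{dag} with a compositionally sparse representation.
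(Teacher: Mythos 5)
Your proposal is correct and follows essentially the same route as the paper: simulate the polynomial-time DTM by a polynomial-size Boolean circuit, observe that the circuit is a bounded-fan-in DAG, and identify that with a compositionally sparse representation per Definition~\ref{def:sparse_comp}. The only material difference is in which simulation you invoke and how fan-in is bounded: the paper cites the tighter $\mathcal{O}(T\log T)$-gate conversion and then explicitly decomposes any gate of fan-in $k>2$ into a binary tree of $k-1$ fan-in-$2$ gates (a polynomial blow-up), whereas you use the Cook--Levin tableau, which costs $\mathcal{O}(T^2)$ gates but delivers constant fan-in directly from the locality of the tableau update rule, with an optional constant-size gadget to push it to $c=2$. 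Both are standard and both yield $\poly(d)$ gates of bounded arity, so the conclusions coincide; your added remarks on uniformity and on real-valued targets via dyadic-rational precision are welcome elaborations that the paper leaves to \citet{poggiofraser2024} rather than spelling out in the proof itself.
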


We prove this conjecture from \citealt{poggiofraser2024}. The proof uses the fact that efficiently Turing-computable functions may be translated into Boolean circuits of a polynomial number of gates with a bounded number of input variables. The full proof is deferred to Appendix \ref{app:effturingimpliescomp}. 




\subsection{Deep Learning under Compositional Sparsity}




Compositionally sparse functions may be approximated by suitable deep networks while avoiding the curse of dimensionality---this is generally not the case for shallow networks.

\begin{theorem}[\citealt{poggio_deep_shallow_2017}, informal]
\label{th:deepVsShallow}
Suppose $f$ is a \emph{compositionally sparse} function of input dimension $d$ with \gls*{dag} \(G_f\). Let the complexity of a network be the number of its trainable parameters. Then
\begin{enumerate}
    \item \textbf{Shallow Networks} with one hidden layer require complexity \(\mathcal O(\epsilon^{-d})\) to approximate $f$ to an accuracy $\epsilon > 0$, which is the best bound possible, whereas
    \item \textbf{Deep Networks} mimicking \(G_f\) require complexity $\mathcal O(d\epsilon^{-2})$ to approximate $f$ to an accuracy $\epsilon > 0$. 
\end{enumerate}
\end{theorem}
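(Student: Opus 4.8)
The plan is to prove the two claims with their respective machinery: the deep upper bound is a constructive argument followed by an error‑propagation analysis along the DAG, while the shallow optimality claim rests on classical nonlinear $n$‑width lower bounds. For the deep upper bound, I would first normalize $G_f$: since each constituent depends on at most the constant $c$ variables, any node can be re‑expressed as a small sub‑DAG of arity‑$2$ constituents, which multiplies the node count by only a constant factor and raises the depth by $\mathcal O(\log c)=\mathcal O(1)$; so without loss of generality every constituent has arity $2$ and there are still $\mathcal O(\poly d)$ of them (for the stated bound, $\mathcal O(d)$). I then build a deep network $g$ with exactly the topology of $G_f$, replacing each node $h_v$ by a one‑hidden‑layer sub‑network $\hat h_v$ of $N_v$ units. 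By the classical degree‑of‑approximation bound for shallow networks, a Lipschitz function of $2$ variables is approximated to sup‑error $\delta$ by $\mathcal O(\delta^{-2})$ units (the $d=2$ instance of the generic $N^{-1/d}$ rate), so $N_v=\mathcal O(\delta^{-2})$; summing over the $\mathcal O(d)$ nodes gives total complexity $\mathcal O(d\,\delta^{-2})$.

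It remains to choose $\delta$ so that $\|f-g\|_\infty\le\epsilon$, and this error‑propagation step is the crux. I would proceed by induction over the layers of the DAG: if every input fed to node $v$ is already perturbed by at most $\eta$ relative to its exact value, and $h_v$ is $L$‑Lipschitz and approximated to $\delta$ by $\hat h_v$, then the output of $\hat h_v$ deviates from the exact value of $h_v$ by at most $L\eta+\delta$. Unrolling along the longest root‑to‑leaf path of length $D$ yields total error $\lesssim \delta\,(1+L+\cdots+L^{D})$; since constituents act on bounded domains one may rescale so that the relevant Lipschitz constants are $\le 1$, and the sum telescopes to $\mathcal O(D\,\delta)$. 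Setting $\delta=\epsilon/\mathcal O(D)$ then gives accuracy $\epsilon$ at complexity $\mathcal O(d\,D^{2}\epsilon^{-2})$, which reduces to the claimed $\mathcal O(d\epsilon^{-2})$ for bounded (or logarithmic) depth, or after absorbing the polynomial‑in‑$d$ depth factor into the informal ``$\mathcal O$''. Controlling exactly how the per‑node errors compound through the composition — in particular the interplay between the constituents' Lipschitz constants and the depth — is the main obstacle, and is where the ``informal'' qualifier does its work.

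For the shallow claims, the positive direction uses the same classical estimate in dimension $d$: a one‑hidden‑layer network with $N$ units approximates any Lipschitz $f$ on $[0,1]^d$ to error $\mathcal O(N^{-1/d})$, so $N=\mathcal O(\epsilon^{-d})$ units — hence $\mathcal O(\epsilon^{-d})$ parameters — suffice. For optimality I would argue via nonlinear width: the functions realizable by an $N$‑parameter shallow network form a family parametrized by $\mathcal O(N)$ reals, whereas $\epsilon$‑covering the unit ball of $\mathrm{Lip}([0,1]^d)$ in $L_\infty$ has metric entropy $\exp(\Theta(\epsilon^{-d}))$, so a pigeonhole count (sharpened by the Maiorov–Pinkus and DeVore–Howard–Micchelli lower bounds on approximation by $N$‑term ridge combinations) forces $N=\tilde\Omega(\epsilon^{-d})$ for the worst target. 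Finally, to see this worst case can be taken \emph{compositionally sparse}, I would exhibit a balanced binary tree of generic Lipschitz $2$‑variable constituents whose composition realizes (or contains as a restriction) such a hard $d$‑variable Lipschitz function, certifying that the $\epsilon^{-d}$ bound is unavoidable for shallow networks even within the class; here I would cite the existing constructions rather than rebuild them.
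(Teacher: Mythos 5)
The paper itself gives no proof of this theorem --- it is a restatement, marked informal, of results from the cited Poggio et al.\ (2017) and the earlier Mhaskar--Poggio work, so there is no in-paper argument to compare against; the comparison is necessarily with that external source. Your reconstruction matches the cited argument in both halves: a constructive deep upper bound obtained by plugging an $\mathcal O(\delta^{-2})$-unit shallow block into each DAG node and telescoping the per-node error Lipschitz-fashion toward the root, and a shallow lower bound via nonlinear $N$-widths and the metric entropy of the Lipschitz ball (Maiorov--Pinkus, DeVore--Howard--Micchelli). You are also right to flag the depth factor: with Lipschitz constants normalized to $\le 1$, the telescoping gives $\mathcal O(d\,D^2\,\epsilon^{-2})$ with $D=\Theta(\log d)$ for a balanced binary tree, so the quoted $\mathcal O(d\,\epsilon^{-2})$ silently drops a polylogarithmic term --- this is precisely the sense in which the theorem is ``informal,'' and your explicit accounting of it is cleaner than the bare statement. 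Likewise the normalization to arity $2$ and the restriction to $\mathcal O(d)$ nodes (rather than the $\mathcal O(\poly d)$ allowed by the definition) track the fact that the source's Theorem 2 is stated for binary-tree compositional functions $W^m_{n,2}$, not the fully general DAG. The one place your sketch is deliberately thin is the final step of the lower bound, certifying that a worst-case $d$-variable Lipschitz target can be realized \emph{inside} the compositionally sparse class; the source's device is a binary tree whose two-variable constituents range over a full Lipschitz ball so that the composite class remains entropy-large, and citing that construction, as you do, is exactly the intended move rather than a gap.
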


Combining Theorem \ref{th:mainTuringSparse} on the compositional property of all efficiently computable functions with Theorem \ref{th:deepVsShallow} from above, yields the following corollary:

\begin{corollary}[cf.~\citet{poggiofraser2024}]
Any efficiently Turing-computable function (Boolean or real-valued) may be approximated to an accuracy of $\epsilon > 0$ with a deep, sparse network of polynomial complexity in $d$ and $\epsilon^{-1}$. 
\end{corollary}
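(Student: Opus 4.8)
The strategy is simply to chain the two results already in hand: Theorem~\ref{th:mainTuringSparse} supplies exactly the structural hypothesis that Theorem~\ref{th:deepVsShallow}(2) requires. First, given an efficiently Turing-computable $f$ on $d$ inputs, I would apply Theorem~\ref{th:mainTuringSparse} to conclude that $f$ is compositionally sparse; concretely, $f$ is computed by a \gls*{dag} $G_f$ with $N = \mathcal{O}(\poly d)$ internal nodes, each labeled by a constituent function depending on at most $c = \mathcal{O}(1)$ of its arguments. This is the only place the complexity-theoretic hypothesis enters; from here on $G_f$ is a fixed, polynomially sized combinatorial object.

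For the real-valued case I would then instantiate the construction behind Theorem~\ref{th:deepVsShallow}(2) on $G_f$: replace each node $v$ by a constant-size sub-network $\widehat{h}_v$ that approximates the constituent $h_v$ (a function of only $c$ effective inputs) to accuracy $\delta$, and wire these sub-networks together following the edges of $G_f$. A shallow approximant in $c = \mathcal{O}(1)$ variables needs $\mathcal{O}(\poly(\delta^{-1}))$ parameters --- e.g.\ $\mathcal{O}(\delta^{-2})$ under the smoothness assumptions of Theorem~\ref{th:deepVsShallow} --- so the full network has $\mathcal{O}(N\cdot\poly(\delta^{-1})) = \mathcal{O}(\poly(d)\cdot\poly(\delta^{-1}))$ trainable parameters. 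It then remains to pick $\delta$ as a function of the target accuracy $\epsilon$; we will be able to take $\delta^{-1}$ polynomial in $d$ and $\epsilon^{-1}$, so the count stays polynomial in $d$ and $\epsilon^{-1}$.

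The Boolean case is a degenerate, easier instance: $G_f$ is then a Boolean circuit of $\mathcal{O}(\poly d)$ gates of fan-in at most $c$, each computing one of finitely many Boolean functions on $\le c$ bits. Every such gate can be realized \emph{exactly}, or to any prescribed sub-constant error, by a fixed-size gadget of standard units, so the network has $\mathcal{O}(\poly d)$ parameters with no dependence on $\epsilon$ at all, and the claim follows immediately.

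The step needing genuine care, and hence the main obstacle, is the propagation of the per-node approximation errors from the leaves of $G_f$ to its root. Since the \gls*{dag} produced from a polynomial-time Turing machine can have depth as large as $\poly d$, a crude estimate bounds the root error by $\delta$ times a product of Lipschitz constants along paths, which is exponential in the depth in the worst case. I would control this amplification by exploiting (i) the bounded fan-in $c$, which limits how errors fan out at each node, and (ii) the fact that the constituents coming from the Boolean-circuit translation are $\mathcal{O}(1)$-Lipschitz and can be arranged --- e.g.\ by rescaling, or by keeping intermediate signals $\{0,1\}$-valued and thresholding --- so that the total amplification across $G_f$ is at most $\poly(d)$. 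Choosing the per-node accuracy $\delta = \epsilon/\poly(d)$ then yields root error at most $\epsilon$ while keeping $\delta^{-1}$ polynomial in $d$ and $\epsilon^{-1}$; substituting back into the parameter count from the second step gives a deep, sparse network of complexity $\mathcal{O}(\poly(d)\cdot\poly(\epsilon^{-1}))$ --- concretely $\mathcal{O}(\poly(d)\,\epsilon^{-2})$ in the setting of Theorem~\ref{th:deepVsShallow} --- as claimed.
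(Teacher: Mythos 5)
Your proposal takes the same route the paper does: the paper simply states that the corollary follows by ``combining Theorem~\ref{th:mainTuringSparse} on the compositional property of all efficiently computable functions with Theorem~\ref{th:deepVsShallow},'' and offers no further proof. You chain the same two theorems in the same direction, so this is not a different approach but a more careful filling-in of what the paper leaves implicit.

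Where your write-up goes beyond the paper is in flagging the error-propagation issue, and that flag is well-placed. Theorem~\ref{th:deepVsShallow} is an informal restatement of a result (from \citealt{poggio_deep_shallow_2017}) whose $\mathcal{O}(d\epsilon^{-2})$ bound is proved under smoothness assumptions on the constituents and for a fixed given DAG; the DAG handed to you by Theorem~\ref{th:mainTuringSparse} is produced by the Turing-machine-to-circuit translation and can have $\poly(d)$ depth, so one cannot blindly quote the headline bound without checking that the per-node errors do not amplify super-polynomially along deep paths. Your two observations --- that in the Boolean case intermediate signals can be kept $\{0,1\}$-valued by thresholding so errors do not accumulate at all, and that in the real-valued case bounded fan-in plus $\mathcal{O}(1)$-Lipschitz constituents (or explicit rescaling) keep the amplification polynomial, so $\delta = \epsilon/\poly(d)$ suffices --- are exactly the control one needs. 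The one point I would press you on is the real-valued case: the paper itself acknowledges that ``efficiently Turing-computable'' for real-valued $f$ requires a choice of computability model, and you should say explicitly which model you use and why the constituents it produces satisfy the Lipschitz/smoothness hypotheses that make your $\delta^{-2}$-per-node count legitimate. The Boolean case, as you note, is clean; the real-valued case needs that one extra sentence to be airtight.
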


As such, \glspl*{dnn} are universal approximators for all practically computable functions, capable of avoiding the curse of dimensionality. 

\section{Learnability and Optimization of Compositionally Sparse Functions}
\label{sec:learnability}

It is noteworthy, that the curse of dimensionality occurs not only in the context of network complexity, i.e., a question related to approximation. It also extends to optimization, namely, convergence rates and sample complexity. While compositional sparsity explains why deep networks can \emph{represent} efficiently computable functions with polynomial complexity, it does not guarantee that such representations can be \emph{learned} efficiently from input-output pairs. This section explores the theoretical and practical challenges of learning compositionally sparse functions and connects these insights to modern training paradigms.

\subsection{Theoretical Challenges}
\label{sec:challenges }

Learning general compositionally sparse functions from input-output examples faces fundamental complexity barriers. A foundational result by \citet{goldreich1986} demonstrates that, under standard cryptographic assumptions (specifically, the existence of one-way functions), there exist families of Boolean circuits of polynomial size that are not efficiently learnable in the distribution-free setting by any polynomial-time algorithm, regardless of the representation used. This result is representation-independent and applies directly to the class of functions computable by polynomial-size Boolean circuits, which encompasses compositionally sparse functions as defined in this work.

\begin{theorem}[\citealt{goldreich1986}, informal]
Assuming the existence of one-way functions, there exists a polynomial $p$ such that the class of Boolean circuits with at most $p(n)$ gates is not learnable in polynomial time by any polynomial-time evaluable representation class.
\end{theorem}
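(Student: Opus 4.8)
The plan is to establish the theorem in contrapositive form: rather than arguing directly about hardness of learning, I would exhibit, \emph{inside} the class of polynomial-size Boolean circuits, a family of functions that no polynomial-time algorithm can predict noticeably better than random guessing --- a pseudorandom function family. The ingredients come from the classical chain of constructions whose function-family endpoint is \citealt{goldreich1986}: from any one-way function one obtains a pseudorandom generator (H{\aa}stad--Impagliazzo--Levin--Luby), and from a pseudorandom generator the GGM tree construction of \citealt{goldreich1986} yields a keyed family $\{f_s\}_{s \in \{0,1\}^n}$ with each $f_s \colon \{0,1\}^n \to \{0,1\}$ indistinguishable from a uniformly random function by any polynomial-time oracle algorithm. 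The structural fact I would extract is that there is a \emph{single} polynomial $p$ such that every $f_s$ is computed by a circuit with at most $p(n)$ gates: evaluating $f_s$ on an $n$-bit input is $n$ successive applications of the generator along a root-to-leaf path of the GGM tree, each application being a circuit of fixed polynomial size, followed by one hard-core bit extraction. Fixing this $p$, take the concept class to be $\mathcal C_n = \{f_s : s \in \{0,1\}^n\}$, which sits within the circuits of at most $p(n)$ gates.

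First I would assume, toward a contradiction, that $\mathcal C = \bigcup_n \mathcal C_n$ is PAC-learnable in polynomial time by some polynomial-time-evaluable representation class $\mathcal H$: there is an algorithm $L$ running in time $\poly(n, \epsilon^{-1}, \delta^{-1})$ that, given $m = \poly(n, \epsilon^{-1}, \delta^{-1})$ i.i.d.\ labeled examples $(x, f_s(x))$, outputs with probability at least $1 - \delta$ a hypothesis $h \in \mathcal H$ with $\Pr_x[h(x) \neq f_s(x)] \leq \epsilon$. Since distribution-free learnability in particular entails success under the uniform distribution on $\{0,1\}^n$, I may take the example inputs to be uniform. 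From $L$ I would build an oracle distinguisher $D^g$: it draws $m$ uniform points, queries the oracle $g$ on each to assemble a labeled sample, runs $L$ on that sample to obtain $h$, then draws one fresh uniform point $x^\star$ and outputs $1$ iff $h(x^\star) = g(x^\star)$. Because $h$ is polynomial-time evaluable and $L$ is efficient, $D$ runs in polynomial time and makes $m+1$ oracle queries --- exactly the kind of adversary the pseudorandomness guarantee rules out.

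Then I would compute $D$'s advantage. When the oracle is $g = f_s$ for uniformly random $s$, the learner succeeds with probability at least $1 - \delta$, and conditioned on success $h$ agrees with $f_s$ at $x^\star$ with probability at least $1 - \epsilon$; taking, say, $\epsilon = \delta = 1/8$ gives $\Pr[D^{f_s} = 1] \geq 3/4 - \mathrm{negl}(n)$, the negligible term absorbing the chance that $x^\star$ collides with a sample point (the domain has exponential size). When the oracle is a truly random function $R$, the bit $R(x^\star)$ is uniform and independent of everything that produced $h$ --- again up to the negligible collision event --- so $\Pr[D^{R} = 1] = 1/2 + \mathrm{negl}(n)$. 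The gap of roughly $1/4$ is non-negligible, contradicting the defining security of the family $\{f_s\}$ and hence the assumed existence of one-way functions. This proves the stated theorem; moreover, since every compositionally sparse function in the sense of \Cref{def:sparse_comp} is in particular computed by a polynomial-size circuit, the same obstruction shows that learning compositional sparsity in the distribution-free setting is, under the same cryptographic assumption, intractable.

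The hard part is not the reduction, which is a one-line oracle simulation, but the two structural commitments it rests on. The first is pinning down that the pseudorandom family genuinely lies inside circuits of a single fixed polynomial size $p(n)$: this requires carrying the polynomial blow-ups honestly through the generator construction and the $n$-fold GGM recursion, and possibly padding so that the gate count is a clean function of $n$, uniformly in $s$. The second is getting the quantifiers on $\mathcal H$ exactly right --- ``polynomial-time-evaluable representation class'' is precisely what is needed for $D$ to remain efficient no matter how the learner encodes $h$, which is what makes the conclusion representation-independent rather than merely a statement about learners restricted to output small circuits. With those two points nailed down, the probabilistic bookkeeping in the advantage calculation is routine.
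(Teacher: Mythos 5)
The paper treats this theorem as cited background from Goldreich--Goldwasser--Micali and supplies no proof of its own, so there is no in-paper argument to compare against. Your proposal is the canonical reduction and it is correct: one-way function $\Rightarrow$ pseudorandom generator (H{\aa}stad--Impagliazzo--Levin--Luby) $\Rightarrow$ GGM pseudorandom function family $\{f_s\}$, each $f_s$ computable by a circuit of a single polynomial size $p(n)$ uniformly in $s$; then a hypothetical polynomial-time learner outputting a polynomial-time-evaluable hypothesis is converted into an efficient oracle distinguisher with advantage on the order of $1/4$, contradicting pseudorandomness and hence the existence of one-way functions. The two ``structural commitments'' you flag are exactly the right ones to be careful about: the uniform-in-$s$ polynomial bound on circuit size (carrying the blowups honestly through the PRG construction and the $n$-fold GGM recursion) is what places the family inside the stated concept class, and the polynomial-time-evaluable hypothesis condition is precisely what keeps the distinguisher efficient and makes the hardness representation-independent rather than a statement only about proper learners. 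The advantage bookkeeping with $\epsilon = \delta = 1/8$ and the negligible collision term of order $m/2^n$ is the standard calculation and goes through.
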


This cryptographic hardness result implies that, without additional structural assumptions or access to more than just input-output pairs, learning arbitrary compositionally sparse functions is infeasible in the worst case. In other words, even though such functions can be efficiently represented and computed, their learnability from examples alone is fundamentally limited by computational complexity barriers.

However, this worst-case hardness does not preclude the efficient learnability of many important subclasses of compositionally sparse functions. For example, \citet{Mansour1994} showed that most sparse Boolean functions are easy to learn, and further work has identified tractable subclasses such as staircase functions~\cite{abbe_staircase_2021} or sparse polynomials~\cite{Negahban2012LearningSB}. The key insight is that while compositional sparsity is necessary for efficient representation, it is not sufficient for efficient learnability in the absence of further assumptions or structural information. In practice, providing partial supervision, architectural biases, or exploiting additional properties of the constituent functions can make learning feasible.

Despite these theoretical barriers, \glspl*{dnn} that likely represent compositionally sparse functions often achieve impressive performance on real-world tasks with high-dimensional data. This empirical success suggests that, in practice, the structure of real-world problems and the inductive biases of \glspl*{dnn} frequently circumvent worst-case hardness. Some theoretical results support this phenomenon: E.g., \citet{bauer_deep_2019} show that in non-parametric regression, the convergence rate of shallow networks diminishes exponentially with the input dimension, whereas \glspl*{dnn} can achieve convergence rates that depend only on the interaction order of the underlying compositional function, not the ambient dimension. Further work~\cite{schmidt_nonparametric_2020, kohler_langer_2021, dahmen_compositional_2023, cagnetta_compositional_2024} demonstrates that \glspl*{dnn} can overcome the curse of dimensionality and efficiently learn compositional functions under suitable assumptions.

An interesting example of sparse Boolean functions is provided by staircase functions, which are hierarchically structured Boolean functions over a high-dimensional hypercube. Such functions can be learned in polynomial time using layer-wise stochastic gradient descent on specialized \glspl*{dnn} \cite{abbe_staircase_2021}. The hierarchical structure enables gradient descent to progressively combine low-level features into higher-level ones through network depth. \citet{Negahban2012LearningSB} show that functions representable as $s$-sparse Boolean polynomials over $n$ variables can be learned via $L_1$-constrained convex optimization inspired by compressed sensing, achieving $O(s^2n)$ sample complexity. Similarly, Boolean functions with Fourier spectra concentrated on $k$ non-zero coefficients (out of a predefined set $P$ of potential basis elements) can be learned via $L_1$-constrained regression with $O(k\log^4|P|)$ samples \cite{Stobbe2012LearningFS}.

These results collectively highlight that, although the worst-case learnability of compositionally sparse functions is computationally hard, many natural and structured subclasses remain efficiently learnable. This dichotomy underscores the importance of leveraging additional structure---whether through architectural design, training paradigms, or supervision---to circumvent the limitations imposed by general hardness results.

\subsection{Implications for Architecture Design}

\glspl*{cnn} address compositional sparsity not because of translational invariance but because the filters are constrained to local patches, which induces sparse Toeplitz weight matrices. This architectural bias, which is independent of the convolutional property, ensures that the network computes a compositionally sparse function. Recent work by \citet{xu_squareloss_2023} demonstrates that such sparsity can be leveraged to obtain significantly tighter generalization bounds than those derived from naive Rademacher complexity, for both sparse and dense networks. This suggests that, in domains where \glspl*{cnn} excel---that is, where the underlying function is well-approximated by a compositionally sparse structure---deep sparse learners are essential for strong generalization. Notably, these generalization results depend on the sparsity of the weight matrices, not on convolution per se.

On the optimization side, compositional sparsity also impacts the symmetry properties of the network. Dense architectures, such as fully connected networks, possess large symmetry groups: permutations of neurons or layers often leave the function unchanged, resulting in highly degenerate loss landscapes with many equivalent minima~\cite{Kawaguchi2016, Brea2019, Nguyen2017OptimizationLA, Gissin2019TheIB}. In contrast, compositionally sparse architectures---including \glspl*{cnn}---have much smaller symmetry groups, as their constrained connectivity restricts the set of permissible permutations. For example, \glspl*{cnn} avoid permutation symmetries in weight space by enforcing translational equivariance, which simplifies the optimization landscape and can accelerate convergence. Recent works have begun to explore general principles of learning in- and equivariances in \glspl*{dnn} \cite{vanderouderaaLearningInvariantWeights2022} and the interplay between sparsity, equivariance, and symmetry reduction~\cite{ziyin2024symmetry}, suggesting that architectural biases that reduce or exploit symmetry can induce useful structure and constraints on learning, thereby serving as a general mechanism for improving learnability and optimization.

While structural sparsity is a natural fit for image-based \glspl*{nn}~\cite{Yamins2014PerformanceoptimizedHM, Cichy2016ComparisonOD, Eickenberg2017SeeingIA}, it is generally challenging to impose such sparsity directly in architectures for other domains. This raises the question of how to induce or learn compositional sparsity in more general settings. Transformers, for instance, may address this by dynamically learning to \emph{focus} on a small, input-dependent subset of tokens via attention~\cite{Vaswani2017, han2023transformers, Poggio2023HowDS}. \citet{song2025how} rigorously characterize how sparse attention mechanisms in transformers approximate exact attention, revealing that attention patterns naturally exhibit sparsity. Recent empirical studies have further shown that transformers often exhibit emergent compositional structure across layers~\cite{murty2022characterizing, vig2019analyzing, petty2023comp}. We posit that the key lies in the autoregressive training framework: \textbf{by training models to predict each token conditioned on previously generated tokens, autoregressive training encourages the incremental construction of complex outputs from simpler components. This process can naturally lead to the emergence of compositional sparsity through learned intermediate representations}.

\subsection{Universality of Auto-Regressive Predictors and Chain-of-Thought}

A recent result by \citet{malach_autoregressive_2023} can be derived as a corollary of Theorem \ref{th:mainTuringSparse}. Malach's theorem---obtained independently and in a somewhat different context---establishes that autoregressive next-token predictors are universal learners for any efficiently Turing computable function. Specifically, \citet{malach_autoregressive_2023} states:

\begin{theorem}[\citealt{malach_autoregressive_2023}, informal]
\label{th:autoregressive}
For any function $f$ that is efficiently Turing-computable, there exists a dataset $\mathcal D$ such that training a (linear) auto-regressive next-token predictor on $ \mathcal D$ results in a predictor that 
approximates $f$.
\end{theorem}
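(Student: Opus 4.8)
The plan is to derive the statement as a corollary of Theorem~\ref{th:mainTuringSparse} by feeding the autoregressive predictor a dataset of \emph{computation traces} (chain-of-thought-style sequences) and showing that next-token prediction on such traces is, position by position, a constant-dimensional and hence linearly solvable problem. First I would invoke Theorem~\ref{th:mainTuringSparse}: an efficiently Turing-computable $f\colon \mathcal X^d \to \mathcal X$ admits a \gls*{dag} $G_f$ with $N = \mathcal O(\poly d)$ constituent functions $g_1,\dots,g_N$, each depending on at most $c = \mathcal O(1)$ of its inputs, whose root equals $f$. Fix a topological order of $G_f$ and, for every input $x$, define the trace $\tau(x) = (x_1,\dots,x_d, z_1,\dots,z_N)$, where $z_i$ is the output of $g_i$ on its (at most $c$) parents drawn from $x_1,\dots,x_d,z_1,\dots,z_{i-1}$; by construction $z_N = f(x)$. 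The dataset is $\mathcal D = \{\tau(x) : x \in \mathcal X^d\}$ (or a sufficiently rich i.i.d.\ sample), presented as a sequence task: from the prefix $(x_1,\dots,x_d,z_1,\dots,z_{t-1})$ predict the next token $z_t$.

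The core step is to show each of these $N$ next-token maps is computed by a linear function over a fixed, polynomially sized feature embedding $\phi$ of the prefix --- e.g.\ all monomials of degree at most $c$ in the prefix coordinates, of which there are $\binom{d+N}{\le c} = \mathcal O(\poly d)$ because $c$ is constant. For Boolean $g_i$ this is just the multilinear (Fourier) expansion restricted to the $\le c$ relevant bits, which has at most $2^c$ nonzero coefficients and is therefore realized \emph{exactly} by a sparse linear predictor over $\phi$; for real-valued constituents one replaces exact representation by an $\epsilon'$-accurate bounded-degree polynomial approximation on the (bounded-dimensional, compact) domain of $g_i$. Since the per-position target is linear in $\phi$ and the dataset is realizable, empirical risk minimization (least squares, or the gradient descent actually used to train the linear autoregressive predictor) recovers this linear map; composing the learned per-position maps reproduces the trace, so the predictor, run autoregressively on a fresh $x$, emits $z_N \approx f(x)$. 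Taking $\epsilon' = \epsilon/\poly(d)$ and tracking how approximation error propagates through the at most $N$ compositions yields the target accuracy $\epsilon$ with polynomial resources.

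The main obstacle I anticipate is the real-valued case: one must argue that the per-step polynomial approximations compose without catastrophic error amplification --- this requires a Lipschitz/boundedness assumption on the constituents, exactly as in Theorem~\ref{th:deepVsShallow}, together with a careful union over the $N$ steps --- while keeping the feature map $\phi$ polynomially sized yet expressive enough to hit the required accuracy for every constituent. A secondary subtlety is what the ``linear'' predictor is allowed to see: it must discover from $\mathcal D$ alone \emph{which} previous tokens each $g_i$ depends on, which is why $\phi$ ranges over all bounded-degree monomials of the whole prefix rather than a hand-picked subset --- this keeps the construction representation-independent at the cost of a still-polynomial feature blow-up, and one must check that the size of $\mathcal D$ suffices to pin the linear map down on the relevant low-dimensional marginals. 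In the Boolean case these issues largely evaporate and the argument reduces to the circuit-trace construction of \citet{malach_autoregressive_2023}, here recast directly through compositional sparsity.
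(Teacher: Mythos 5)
Your derivation follows the exact route the paper gestures at: it does not re-prove Malach's theorem from scratch but instead derives it from Theorem~\ref{th:mainTuringSparse} together with the learnability of sparse constituent functions, which is precisely how the paper arrives at its ``effectively equivalent'' Corollary~\ref{th:corollary} (citing Mansour's Theorem~4.9 in place of your explicit monomial feature map plus linear realizability). The trace dataset $\tau(x)$, the per-position reduction to a $c$-sparse prediction problem, and the observation that the Boolean case collapses to Malach's circuit-trace construction are all consistent with the paper's intent; you simply fill in the concrete feature embedding and the error-propagation concerns that the paper leaves implicit. One small caution: the paper's one-line justification leans entirely on Mansour-style learnability of sparse Boolean functions, whereas your realizability argument via least squares over degree-$\le c$ monomials additionally needs the coverage/conditioning condition you flag at the end --- that $\mathcal D$ pins down the linear map on the low-dimensional marginals each $g_i$ actually reads --- so that caveat should stay if the sketch were expanded into a formal proof.
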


Because Boolean sparse functions are easy to learn~\citep[Theorem 4.9]{Mansour1994}, it is easy to show that our Theorem \ref{th:mainTuringSparse} implies the following statement (which is effectively equivalent to Theorem \ref{th:autoregressive}):

\begin{corollary}[informal]
\label{th:corollary}
Any function $f$ that is efficiently Turing-computable, can be learned if training sets are available for each of the sparse constituent functions in one of its decompositions. 
\end{corollary}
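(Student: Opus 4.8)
The plan is to chain together three ingredients already available in the excerpt: the compositional decomposition guaranteed by Theorem \ref{th:mainTuringSparse}, the efficient learnability of each constant-arity constituent from its own training set, and a composition step that reassembles the learned pieces along the DAG.

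First, I would apply Theorem \ref{th:mainTuringSparse} to $f$: since $f$ is efficiently Turing-computable it is compositionally sparse, so it admits a DAG $G_f$ with $N = \mathcal{O}(\poly d)$ internal nodes, where the constituent at node $v$ is a function $h_v$ of at most $c = \mathcal{O}(1)$ variables. If $f$ is real-valued, I would instead work with the Boolean-circuit representation used in the proof of Theorem \ref{th:mainTuringSparse} (acting on the bit representation of the inputs and outputs), so that every $h_v$ is a Boolean function on $\{0,1\}^{\le c}$; this is precisely what makes the error analysis below clean.

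Second, I would learn each constituent. A function of $c$ variables is in particular an $s$-sparse Boolean function with $s \le 2^c = \mathcal{O}(1)$, so by \citet[Theorem 4.9]{Mansour1994}---or, more elementarily, because its truth table has only $2^c$ entries---it is PAC-learnable from its training set $\mathcal D_v$ with sample size and running time polynomial in $\log(1/\delta_v)$, returning a hypothesis $\hat h_v$ that agrees with $h_v$ on the input distribution of node $v$ with probability at least $1-\delta_v$. Setting $\delta_v = \delta/N$ and taking a union bound, all $N$ constituents are learned correctly simultaneously with probability at least $1-\delta$, at total cost still polynomial in $d$ and $1/\delta$. Third, I would compose: let $\hat f$ be the function computed by $G_f$ with each $h_v$ replaced by $\hat h_v$. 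On the good event, an induction along a topological order of $G_f$ shows $\hat f = f$ pointwise on the support of the test distribution---at the leaves there is nothing to prove, and at an internal node $v$ the inputs fed to $\hat h_v$ when evaluating $\hat f$ equal, by the induction hypothesis applied to the children of $v$, the true inputs fed to $h_v$ when evaluating $f$, so $\hat h_v$ is correct there as well. Hence $\hat f$ agrees with $f$ except with probability $\delta$, which is the claimed learnability statement; reading the DAG-evaluation order as an autoregressive generation order then recovers the equivalence with Theorem \ref{th:autoregressive}.

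The step I expect to be the main obstacle is the composition/error-propagation analysis: in general, plugging approximate constituents into a polynomially deep DAG can amplify errors, and the constituents are trained on their own marginal input distributions rather than on the possibly perturbed distributions induced once downstream constituents are themselves only approximate. Reducing to the Boolean-circuit representation from Theorem \ref{th:mainTuringSparse} dissolves both difficulties at once---correctness is all-or-nothing, so there is no amplification, and the induction on topological order shows the induced input distributions coincide with the true ones on the good event, so no distribution-shift robustness is needed. A purely real-valued treatment avoiding this reduction would instead require quantitative Lipschitz control of each $\hat h_v$ together with a path-counting bound on $G_f$ to keep the accumulated error at the root of size $\mathcal{O}(\poly d)\cdot\epsilon$; this is feasible but considerably more delicate, and I would relegate it to a remark rather than carry it out in full.
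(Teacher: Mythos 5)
Your proposal reconstructs exactly the argument the paper sketches: \cref{th:mainTuringSparse} supplies the polynomial-size, constant-fan-in DAG decomposition, Mansour's Theorem 4.9 (or, as you note, the even simpler truth-table argument for constant-arity Boolean functions) handles the learnability of each constituent, and a union bound plus induction over a topological order reassembles the pieces. The paper gives only the one-line justification ``Theorem~\ref{th:mainTuringSparse} plus sparse Boolean functions are easy to learn''; your explicit treatment of the composition step --- in particular, the observation that reducing to the Boolean-circuit representation removes error amplification and distribution shift on the good event --- is the natural way to make that sketch rigorous and matches the paper's intent.
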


Interestingly, decision trees~\citep[Theorem 5.10]{Mansour1994} are sparse Boolean functions because they have small $L_1$-norm. In a smoothed complexity setting where the input distribution is drawn from the biased binary hypercube, and the bias of each variable is randomly chosen, the class of $\log(n)$-depth decision trees satisfies the general staircase property. In turn, staircase functions with a small number of terms are sparse, since they are a sum of parity functions of increasing degree. 

\citet{poggiofraser2024} argue that a dataset containing the step-by-step output and therefore the intermediate results of constituent functions should suffice to learn any compositionally sparse function conditioned on the learnability of its constituents. The structure of natural language is commonly regarded to be sparse s.t. subsequent tokens only depend on few prior tokens~\cite{liu2022transformers, murty2022characterizing}. This compositional structure can be picked up by auto-regressive next-token predictors~\cite{gan_decisiontrees_2024}.
The vast corpora of natural language~\cite{radford2019language,raffel2020exploring}, on which modern-day \glspl*{llm} are trained on, are therefore natural candidates for such datasets, that contain not only end-to-end examples of input-output pairs of functions, but also intermediate results. This may explain complex reasoning observed in present-day \glspl*{llm}.


Recent empirical work by \citet{lindsey2025biology} provides direct evidence that large language models, such as Claude 3.5 Haiku, internally perform genuine multi-step reasoning in practice. For example, when prompted with \emph{Fact: the capital of the state containing Dallas is}, the model produces the correct answer ``Austin'' by first inferring that Dallas is in Texas, and then that the capital of Texas is Austin. Attribution graph analysis reveals that the model's computation proceeds through distinct intermediate representations corresponding to ``Texas'' and ``capital,'' rather than relying solely on memorized shortcuts. This multi-hop reasoning is reflected in the model's internal feature activations and their interactions, supporting the view that compositional sparsity and hierarchical reasoning are not only theoretically motivated but also empirically realized in state-of-the-art models~\cite{lindsey2025biology}.

\gls*{cot} is one of the most compelling phenomena discovered during the recent study of \glspl*{llm}. In brief, it can be shown that eliciting a series of intermediate reasoning
steps significantly improves the ability of \glspl*{llm} to perform
complex reasoning~\cite{Wei2022chainofthought}. The reasoning steps may be hierarchically structured themselves to break up involved problems into simpler subproblems~\cite{yao2023tree,bubeck2023unreasonable}.

\begin{conjecture}[Chain-of-Thought exploits Compositionally Sparse Functions]
    Chain-of-Thought explicitly decomposes a compositionally sparse learning problem into sparse subproblems, each one of which can be learned. As such, it overcomes the complexity of one-shot learning.
\end{conjecture}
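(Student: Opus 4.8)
Since the statement is phrased as a conjecture, the goal is to turn it into a provable separation by fixing a formal model of Chain-of-Thought supervision and then comparing it, quantitatively, to end-to-end (``one-shot'') learning. First I would fix an efficiently Turing-computable target $f \colon \{0,1\}^d \to \{0,1\}$ (the Boolean case; the real-valued case is treated afterwards) and invoke Theorem~\ref{th:mainTuringSparse} to obtain a compositionally sparse representation: a \gls*{dag} $G_f$ with $N = \mathcal O(\poly d)$ internal nodes, each computing a constituent $g_j$ that depends on at most $c = \mathcal O(1)$ of its inputs. A CoT trace for input $x$ is defined as the string $x \,\|\, g_{\pi(1)} \,\|\, \cdots \,\|\, g_{\pi(N)}$, where $\pi$ is a fixed topological order of $G_f$ and each $g_{\pi(i)}$ is evaluated on tokens emitted before it; the CoT training set $\mathcal D$ is a sample of such traces. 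This is precisely the ``step-by-step'' dataset envisioned by \citet{poggiofraser2024}, and the conjecture then asserts that autoregressive training on $\mathcal D$ is tractable while training on the truncated pairs $(x, f(x))$ is not.

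For the positive direction I would show that next-token prediction on $\mathcal D$ decomposes into $N$ essentially independent sub-problems, the $i$-th being: given the already-emitted prefix (of length $\mathcal O(\poly d)$), predict $g_{\pi(i)}$. Each such sub-problem is the learning of a $c$-junta over $\mathcal O(\poly d)$ variables, which for constant $c$ is solvable with $\poly d$ time and samples --- either by the exhaustive ``locate the $\le c$ relevant coordinates, fit the $2^{c}$-entry truth table'' argument, or, in the spirit of the paper, by noting that such functions have $\mathcal O(1)$ Fourier-$L_1$ norm and are therefore learnable by the algorithm of \citet[Theorem~4.9]{Mansour1994}. A union bound over the $N = \poly d$ steps keeps the total cost polynomial, and unrolling $G_f$ on a fresh input shows the learned predictor computes $f$; this is exactly the mechanism behind Corollary~\ref{th:corollary} and recovers Theorem~\ref{th:autoregressive} as a special case. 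For real-valued constituents one additionally needs a Lipschitz/stability assumption so that the per-node $\epsilon$-errors do not amplify along paths of $G_f$; bounding the blow-up by $(1+L)^{\mathrm{depth}(G_f)}$ and requiring either bounded depth or contractive constituents closes this gap.

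For the negative direction --- the ``overcomes the complexity of one-shot learning'' clause --- I would identify the one-shot problem with distribution-free PAC learning of polynomial-size Boolean circuits from input--output pairs and invoke \citet{goldreich1986}: assuming one-way functions exist, there is a polynomial $p$ and a family of $\le p(n)$-gate circuits (hence compositionally sparse, by Theorem~\ref{th:mainTuringSparse}) that no polynomial-time learner can learn from pairs $(x,f(x))$, in any polynomially evaluable representation. Choosing $f$ from this family yields a function that is hard to learn one-shot yet, by the positive direction, learnable in polynomial time from its CoT traces --- a cryptographically conditional separation that makes ``overcomes one-shot learning'' precise.

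I expect the main obstacle to be the \emph{routing} aspect the informal statement hides: for the per-step sub-problems to reduce to independent $c$-juntas, the serialization must expose the dependency structure of $G_f$ in a form the architecture can exploit, i.e., one must prove that a transformer's attention can cheaply learn (or is effectively handed) the map from node $\pi(i)$ to its $\le c$ arguments among all previously emitted tokens. Making this rigorous --- ideally by showing that sparse attention heads, in the sense characterized by \citet{song2025how}, implement these pointer lookups with polynomial sample complexity --- is where the real difficulty lies; everything downstream of a faithful, learnable serialization is a union bound plus junta learnability. A secondary difficulty is that $\mathrm{depth}(G_f)$ can be as large as $N$, so the error-propagation control in the real-valued case may force the clean statement to be restricted to log-depth ($\mathbf{NC}$) targets, or to the Boolean setting where the intermediate computations are exact.
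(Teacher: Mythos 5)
You take a genuinely different route from the paper, which is appropriate since the statement is only a conjecture and the paper offers a mechanistic sketch rather than a proof. The paper's sketch works \emph{forward} from the predictor: it models CoT as repeated application of a single token-to-token map $f_\theta$, factors $f_\theta$ over a partition $\bigcup_i P_i$ of the input space so that $f_\theta(x) = g_{i,\vartheta_i}(x)$ on $P_i$, and interprets this as an implicit decision tree whose leaf predictors $g_{i,\vartheta_i}$ have bounded density $c \ll d$; since decision trees are themselves sparse (\citealt{Mansour1994}), the unrolled CoT computation is compositionally sparse, whereas direct one-shot evaluation of $f_\theta$ on $X$ is a dense problem. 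Your proposal instead works \emph{backward} from the target: invoke Theorem~\ref{th:mainTuringSparse} to get a sparse \gls*{dag} $G_f$, define CoT traces as an explicit topological serialization of the internal nodes, reduce each prediction step to learning a $c$-junta (via exhaustive search or \citealt[Thm.~4.9]{Mansour1994}), union-bound over the $\poly d$ steps, and then pair this with \citet{goldreich1986} to get a cryptographically conditional lower bound on one-shot learning. The paper's sketch buys a structural explanation of \emph{why} a fixed autoregressive model's CoT behavior looks compositionally sparse; yours buys a cleaner formal \emph{separation theorem} with explicit positive and negative directions, at the price of assuming the observed traces serialize the DAG faithfully --- an idealization the paper's partition-based view avoids. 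Your identification of the routing/attention problem as the real obstacle is well taken and aligns with what the paper leaves as an open question (``how exactly the subfunctions are represented, selected, and learned''); the further caveats about error propagation and depth in the real-valued case are also sound and not addressed by the paper's sketch.
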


\begin{figure}[t]
    \centering
    \includesvg[height=23em]{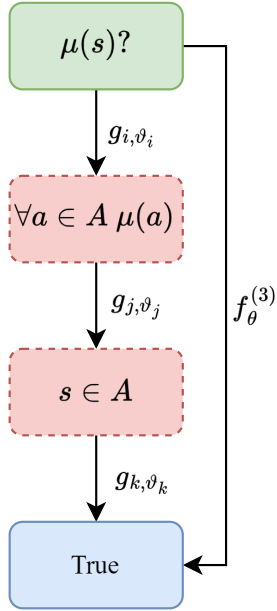}
    \caption{\textit{Is Socrates mortal?} \gls*{cot}-style intermediate solving steps can simplify this famous question to a sequence of general reasoning steps of less complexity than the specific question at hand.}
    \label{fig:cot}
\end{figure}

The following sketch shows how \gls*{cot} fits into the compositional sparsity framework.
Let \(f_\theta: \mathbb T^d \to \mathbb T\) be a token-to-token predictor, \(S[s: e]\) be the subsequence operator on a token sequence \(S\) selecting all tokens from the \(s\)-th to the \(e\)-th token (inclusive), and \([\cdot,\, \dots]\) be the concatenation operator concatenating sequences of tokens. Then, using \gls*{cot} prediction can be understood as repeatedly applying the same predictor to a changing sequence of inputs.
\begin{equation}
    f_\theta([\dots, f_\theta([X[2:d], f_\theta([X[1:d], f_\theta(X)])])])
\end{equation}
It is always possible to factor \(f_\theta(\cdot)\) over a partition \(\bigcup_{i\in I} P_i = \mathbb T^d\), s.t.,
\begin{equation}
    f_\theta(x) = g_{i, \vartheta_i}(x) \quad \text{with}\ i \in I,\, x\in P_i
\end{equation}
and thus understand \(f_\theta(\cdot)\) to implicitly be a decision tree assigning each input to the leaf predictor \(g_{i, \vartheta_i}(x)\) of the input's partition set (construction akin to \citet{belcak2023fast}). The plausibility of this idea is supported by, e.g., the work of \citet{gan_decisiontrees_2024}, who demonstrated that decision tree structures naturally emerge in auto-regressive language models. In the framework of compositional sparsity, each \(g_{i, \vartheta_i}\) constitutes a simpler function of (bounded) density \(c \ll d\). As such, the token-to-token predictor is faced with sparse, learnable functions \(g_{i, \vartheta_i}\) of limited complexity and can solve the problem via repeated prediction. Note that decision trees themselves are sparse \cite{Mansour1994}, implying that the full function \gls*{dag} is compositionally sparse.
To directly predict the result in a single prediction step, i.e., evaluating \(f_\theta\) on the input sequence \(X\) once, on the other hand constitutes a dense problem and is therefore not (easily) learnable if at all.

However, there remain open questions, e.g., how exactly the subfunctions are represented, selected, and learned. Viewing this problem through the lens of compositional sparsity may be helpful in identifying promising research directions. The work of \citet{cheungSuperpositionManyModels2019} shows how a single model can represent several functions using superposition and is thus possibly an answer to the question of representation.

\subsection{Open Questions}
\label{sec:open-questions }

Despite significant theoretical and empirical progress, several critical gaps remain in our understanding of how compositional sparsity interacts with \gls*{dl} in practice. We highlight below a set of open questions that, if addressed, could unify the theory of compositional sparsity with the practical realities of modern \gls*{dl}.

\paragraph{Which Functions Are Learnable?}
A central challenge is to characterize the classes of compositional sparse functions---represented as \glspl{dag}---that can be efficiently inferred from data. While compositional sparsity provides a powerful representational framework, it remains unclear for which \gls{dag} topologies the underlying structure can be reliably discovered by learning algorithms. Notably, recent work suggests that certain function classes, such as staircase functions, may not require a strictly compositional structure, but rather an overlap in the subsets of variables they use. Careful analysis of these cases, as discussed in the literature, may reveal subtler forms of compositionality relevant for learnability.

\paragraph{How Much Supervision Is Needed for Efficient Learning?}
Another open question concerns the amount and type of supervision required to efficiently learn compositional functions. In practice, providing intermediate supervision---such as \gls{cot} steps or explicit labels for subproblems---can dramatically reduce the complexity of learning. However, it is not yet well understood how much and what kind of intermediate supervision is necessary to avoid exponential scaling in sample or computational complexity. Determining the minimal supervision needed for efficient learning remains an important direction, both for theory and practice.

\paragraph{Why Are Multiple Layers Essential in Transformers?}
The empirical success of multilayer transformers in pretraining tasks suggests that predicting the next word in natural language may, in general, require learning a compositionally sparse function that cannot be represented by a single threshold layer. This observation raises further questions about the depth and architectural requirements for capturing compositional structure in practice, and about the specific mechanisms by which transformers exploit or induce such sparsity.

\paragraph{Can SGD Alone Discover Compositional Structure?}
Recent work by \citet{beneventano2024how} shows that stochastic gradient descent (SGD) can naturally recover the support of the target function at the input layer. This raises a fundamental question: can deep networks, in practice, also discover the sparse constituent functions that would appear in a compositional decomposition of the target function? Is the implicit bias of SGD alone sufficient to induce this structure, or are additional biases---such as explicit sparsity constraints (e.g., $L_1$-regularization) or architectural priors---necessary to reliably recover compositional sparsity across layers? Addressing these questions is crucial for understanding the mechanisms by which deep networks learn and represent compositional structure.

\paragraph{How Do Neural Networks Choose Among Multiple Decompositions?}
Beyond these challenges, another subtle but important issue arises: a given function may admit many distinct compositional sparse decompositions, each corresponding to a different hierarchical arrangement of constituent functions. If a \gls*{nn} learns such a decomposition, which one does it select among the many possibilities? Are certain decompositions favored due to architectural inductive biases, optimization dynamics, or properties of the data? What determines this preference, and can it be controlled or predicted? Understanding the factors that bias the learning process toward particular decompositions is important for both interpretability and control.

\medskip

Addressing these open questions is essential for bridging the gap between the theoretical foundations of compositional sparsity and the practical achievements of \gls*{dl}. Progress in these areas will not only deepen our understanding of why \glspl*{dnn} work so well, but may also guide the design of more efficient, interpretable, and robust learning systems.

\section{Alternative Views}
\label{sec:alternativeviews}

Compositional sparsity is not the only theory that serves as a candidate to explain approximation and optimization results for \glspl*{dnn} in high-dimensional learning problems. Manifold learning and hierarchical learning in the multi-index model have been studied as suitable alternatives. 

\paragraph{Manifold Learning}
\label{sec:manifoldlearning}

In the case of manifold learning, the data underlying a learning task is assumed to lie on or near a low-dimensional manifold embedded in the observable high-dimensional ambient space \(\mathbb{R}^d\), the so-called \textit{manifold assumption}. As such, the learning problem can be composed from an embedding function \(g\) and a problem function \(h\)
\begin{align}
    f &:= h \circ g\\
    \text{where}\quad& g : \mathbb R^d \to \mathcal X^k,\, h: \mathcal X^k \to \mathbb R\nonumber 
\end{align}
with \(k \ll d\) and \(\mathcal X\) not necessarily Euclidean. Representing and learning the problem function \(h\) has now become trivially easy for sufficiently small values of the problem dimension \(k\) and therefore avoids the curse of dimensionality.\\ The case of the embedding function \(g\), on the other hand, is more involved. A common assumption on \(g\) posits that the mapping to the low-dimensional space is smooth and thus easy to learn. This condition is satisfied by synthetic data (e.g., S-curved manifold, Swiss-roll manifold, open box,
torus, sphere, fishbowl, etc.). Real-world data manifolds often have varied properties that deviate from ideal assumptions \cite{kiani2024hardness}, so the effective dimensionality may be smaller than required to reasonably explain avoidance of the curse \cite{intro_manifold_learning}. While the classical Nash embedding theorem proves that isometric embeddings into higher-dimensional spaces exist in theory, no known practical algorithm can reliably construct such an embedding. As pointed out by \citet{meilă2023manifoldlearningwhathow}, the question remains open whether we can efficiently learn mappings \(g\) that preserve local geometric structure for complex, high-dimensional data.

In conclusion, manifold learning can serve as an explanation of why \glspl*{dnn} avoid the curse of dimensionality in approximation if  the manifold assumption is true. If the mapping from the ambient space to the embedding space can be found efficiently, the explanation also extends to optimization. The recent work of \citet{liang2024diffusion} hints at how the concepts of compositionality and manifold learning could also be considered as complementary concepts and may both be required to wholly explain the workings of \glspl*{dnn}.


\paragraph{Multi-Index Model}
\label{sec:hierarchical_learning}
The multi-index model \cite{boxAnalysisTransformations1964,bickelAnalysisTransformationsRevisited1981} poses another avenue of studying high-dimensional learning problems (cf. \citet{brunaSurveyAlgorithmsMultiindex2025} for a comprehensive survey). In this model, a regression function \(f: \mathbb R^d \to \mathbb R\) can be expressed as the composition of a low-rank linear transformation \(L\) and a link function \(g\),
\begin{align}
    f(x) &= g(Lx)\\
    \text{where}\quad L \in \mathbb R^{r\times d},\ \ &g: \mathbb R^r \to \mathbb R,\ \ r \ll d.\nonumber
\end{align}
The row space \(\operatorname{ran}(L^T)\) captures those (few) directions of the (high-dimensional) input space that are informative for predicting \(y\). The optimization behavior of \glspl*{nn} can thus be interpreted as a two-phase process: In the first phase, the \emph{search phase}, the model sifts through the noise and identifies the relevant input space components. Subsequently, during the \emph{descent phase}, the model fits the target function on this set of relevant components.
Notably, if the target function \(f\) is a linear combination of several basis elements\footnote{A distribution over an input space induces a Fourier basis of orthogonal functions that are pair-wise uncorrelated under this distribution.}, the optimization follows a step-wise pattern in the online setting and the components are identified iteratively \cite{abbeMergedstaircasePropertyNecessary2022,abbe_leap_2023}. 
If these components are hierarchically structured, i.e., (statistically) hard-to-learn high-order components share input components with low-dimensional easy-to-learn components, learning the function is comparatively much faster than learning isolated high-order components. This property may be satisfied by real-world problems in which \glspl*{dnn} are successful. 

However, existing works suffer from limitations: commonly, they consider shallow 2-layer, or heavily restricted 3-layer \glspl*{nn}, online, projected, and spherical \gls*{sgd}, and layer-wise optimization to name a few practices seldom reflected in real-world training \cite{abbeMergedstaircasePropertyNecessary2022,abbe_leap_2023,arnaboldiOnlineLearningInformation2024,arousOnlineStochasticGradient2021,leeNeuralNetworkLearns2024}. As the work by \citet{arnaboldiRepetitaIuvantData2024} demonstrates, seemingly inconsequential assumptions---such as not reusing data points---can entirely change the number of samples required and the speed at which entire function classes are learned.
This showcases a need for further study to investigate which of the existing results extend to \glspl*{dnn} optimization with standard optimizers.

\section{Discussion and Outlook}
\label{sec:discussion}
In this position paper we prove the conjecture that a large class of functions, namely all efficiently Turing-computable functions, share the property of being compositionally sparse. Theoretical findings on \gls*{dl} prove that it is this property of functions that allows \glspl*{dnn} to overcome the curse of dimensionality in relation to approximation. As such, they are universal approximators for all efficiently Turing-computable functions. On the end of optimization and generalization an abundance of empirical evidence underlines the capabilities of \gls*{dl}, but the theoretical results require strengthening. Notably, it is not entirely clear, when and how deep, potentially dense, learners manage to recover the sparse structure of the underlying task. One of the most prominent emergent phenomena of present day \gls*{dl} and \gls*{llm} research, namely \gls*{cot} reasoning, naturally fits into the proposed framework of compositional sparsity.




In conclusion, compositional sparsity represents a unifying principle: 
\begin{itemize}
    \item \emph{Approximation:} It explains how \glspl*{dnn} can approximate complex tasks, namely, all polynomial-time computable functions without exponential blowup.
    \item \emph{Optimization:} It suggests that discovering or revealing the ``right'' compositional \gls*{dag} is the main difficulty, tackled in part by specialized architectures (\glspl*{cnn}) or new training paradigms (\gls*{cot}, hierarchical prompting).
    \item \emph{Generalization:} It enables smaller effective dimensionality thereby mitigating overfitting in practice.
\end{itemize}

Future work in \gls*{dl} theory will likely refine these ideas further, especially around the open-ended \emph{optimization} challenge of how best to discover compositional structure from limited supervision. If we can more directly incorporate compositional assumptions (e.g., by guiding subfunction learning), we may see further breakthroughs in efficiency and interpretability.

\subsection*{Acknowledgements}
This work was partially supported by the Center for Brains, Minds, and Machines (CBMM) at MIT, NSF grant CCF-1231216, and other research sponsors. Davide D'Ascenzo was financially supported by the Italian National PhD Program in Artificial Intelligence (DM 351 intervento M4C1 - Inv.~4.1 - Ricerca PNRR), funded by NextGenerationEU (EU-NGEU). We thank many colleagues for discussions on these topics, especially members of the CBMM community.


\bibliographystyle{icml2025}
\bibliography{references}

\newpage
\appendix

\newpage
\section{Proof of \cref{th:mainTuringSparse}}
\label{app:effturingimpliescomp}
\begin{theorem*}[Efficient Computability implies Compositional Sparsity]
Any function $f \in \mathbf{FP}$ is compositionally sparse.
\end{theorem*}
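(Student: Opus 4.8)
The plan is to reduce the statement to the classical simulation of polynomial-time Turing machines by polynomial-size Boolean circuits of bounded fan-in, and then to observe that such a circuit is literally a witness to compositional sparsity in the sense of \cref{def:sparse_comp}.

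First I would fix $f \in \mathbf{FP}$ and let $M$ be a deterministic Turing machine computing $f$ in time $T(n) = \mathcal{O}(n^k)$ on inputs of length $n$. Restricting attention to a fixed input length $n = d$ (the number of input variables), I would invoke the standard tableau construction underlying the Cook--Levin theorem: the computation of $M$ on a length-$d$ input can be laid out as a $T(d) \times T(d)$ array of configuration cells, where each cell at time $t+1$ depends on only a constant number of cells at time $t$ through a fixed local transition rule. Encoding cell contents in binary and writing each local rule as a small Boolean formula over the basis $\{\mathrm{AND},\mathrm{OR},\mathrm{NOT}\}$ (equivalently, over NAND) yields a circuit $C_d$ of size $\mathcal{O}(T(d)^2) = \poly(d)$ in which every gate has fan-in at most $2$ and which outputs $f$ on all inputs of length $d$. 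If $f$ is multi-bit valued, $C_d$ simply has $\poly(d)$ output gates; one can either view the resulting DAG as having several roots or apply the argument to each output bit separately.

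Second I would identify $C_d$ with the DAG $G_f$ of \cref{def:sparse_comp}: its leaves are the $d$ input bits, each internal node is a gate computing a constituent function of at most $c=2$ variables, and the root(s) are the output gate(s). Since there are $\poly(d)$ internal nodes, each of in-degree at most $2$, this exhibits $f$ as compositionally sparse. For real-valued $f$, I would adopt the corresponding notion of efficient computability --- a polynomial-time algorithm producing a $2^{-m}$-accurate dyadic approximation in time $\poly(d,m)$ --- and run the same simulation to obtain a $\poly(d,m)$-size, bounded-fan-in circuit for the output bits, absorbing the accuracy parameter into the polynomial.

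The main obstacle is bookkeeping rather than conceptual: one must ensure the Turing-machine-to-circuit simulation uses only \emph{bounded-fan-in} gates (guaranteed by fixing a standard gate basis such as NAND, so every constituent depends on at most two inputs) and incurs only a \emph{polynomial} blow-up in size. Switching to an oblivious Turing machine sharpens the circuit size to $\mathcal{O}(T(d)\log T(d))$, but the cruder quadratic bound already suffices. A secondary point is the scope of the quantifier over input lengths: \cref{def:sparse_comp} concerns a function on a fixed domain $\mathcal{X}^d$, so the argument is carried out per length $d$, and the uniformity of $M$ --- automatic for $f \in \mathbf{FP}$ --- additionally guarantees that the family $\{G_f\}_d$ is itself polynomial-time constructible, which is more than the definition demands.
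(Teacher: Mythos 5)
Your proposal is correct and follows essentially the same route as the paper: reduce $f \in \mathbf{FP}$ to a polynomial-size, bounded-fan-in Boolean circuit via the standard Turing-machine-to-circuit simulation, then read that circuit off as the \gls*{dag} of \cref{def:sparse_comp}. The only cosmetic difference is that you obtain fan-in $2$ directly from the locality of the Cook--Levin tableau (at the cost of a quadratic rather than $T\log T$ circuit size), whereas the paper first cites the $\mathcal{O}(T\log T)$ circuit of \citet{arora2009computational} and then performs an explicit fan-in reduction by replacing each high-fan-in gate with a binary tree.
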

\begin{proof}
    Assume a function $f \in \mathbf{FP}$, i.e., \(f\) is efficiently Turing-computable. By definition, there exists a \gls*{dtm} that computes $f$ in time $T(n) = \mathcal O(\poly(n))$.

    A \gls*{dtm} running in time $T(n)$ can be converted into a Boolean circuit with $\mathcal O(T(n)\log T(n))$ gates~\cite{arora2009computational}. For $T(n) = \mathcal O(\poly n)$, this results in a circuit $C$ of size $\mathcal O(\poly n)$.

    The circuit $C$ may include gates with unbounded fan-in. To ensure compositional sparsity, we transform $C$ into an equivalent circuit with 2 inputs.

    In particular:
    \begin{itemize}
        \item Any gate with $k > 2$ inputs can be replaced with a binary tree of $\lceil{log_2(k)}\rceil$-depth and $k - 1$ gates (see Figure \ref{fig:fanin_decomp} for an example visualization).
        \item Since $k$ in the original circuit is bounded by $\mathcal O(T(n)) = \mathcal O(\poly(n))$, transforming $C$ into a circuit with fan-in 2 increases the circuit size by at most a polynomial factor. Thus, the final circuit has still polynomial size.
    \end{itemize}

    The final circuit is a Boolean circuit of $\mathcal O(\poly(n))$ gates with fan-in 2.
    Since this circuit does not contain any cycles by construction, it can be translated into a \gls*{dag}, where the Boolean inputs constitute the leaves, the gates are the internal nodes, and the output is the root. The circuit therefore computes a compositionally sparse function in the sense of Definition \ref{def:sparse_comp}, which concludes the proof.

    \begin{figure}[b]
        \centering
        \includesvg[width=0.34\textwidth]{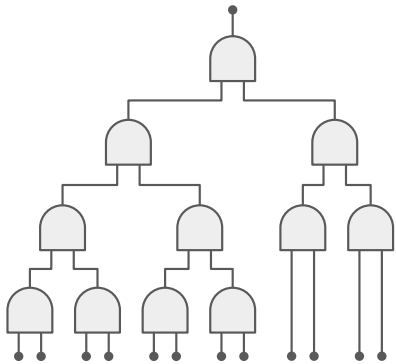}
        \caption{An AND gate with fan-in 12 decomposed in a tree of AND gates with fan-in 2.}
        \label{fig:fanin_decomp}
    \end{figure}
\end{proof}

\end{document}